\newcommand{\beq}{\vspace{0mm}\begin{equation}}
\newcommand{\eeq}{\vspace{0mm}\end{equation}}
\newcommand{\beqs}{\vspace{0mm}\begin{eqnarray}}
\newcommand{\eeqs}{\vspace{0mm}\end{eqnarray}}
\newcommand{\barr}{\begin{array}}
\newcommand{\earr}{\end{array}}
\newcommand{\Lmat}[0]{{{\bf L}}}
\newcommand{\Xmat}[0]{{{\bf X}}}
\newcommand{\Ymat}[0]{{{\bf Y}}}
\newcommand{\vv}{\boldsymbol{v}}
\newcommand{\xv}{\boldsymbol{x}}
\newcommand{\cdotv}{\boldsymbol{\cdot}}
\newcommand{\tdag}{\textsuperscript{\textdagger}}
\newcommand{\tddag}{\textsuperscript{\textdaggerdbl}}
\newcommand{\tstar}{\textsuperscript{\textasteriskcentered}}
\newcommand{\betav}[0]{{\boldsymbol{\beta}}}
\newcommand{\thetav}{\boldsymbol{\theta}}
\newcommand{\nuv}[0]{{\boldsymbol{\nu}}}
\newcommand{\piv}{\boldsymbol{\pi}}
\DeclareMathOperator*{\argmin}{argmin} 
\DeclarePairedDelimiter{\norm}{\lVert}{\rVert}
\newcommand{\given}{\,|\,}
\newtheorem{theorem}{Theorem} %[section]
\newtheorem{lemma}[theorem]{Lemma}
\title{Convex Polytope Trees}
\author[1]{Mohammadreza Armandpour \thanks{armand@stat.tamu.edu}}
\author[2]{ Mingyuan Zhou \thanks{mingyuan.zhou@mccombs.utexas.edu}}
\affil[1]{Department of Statistics, Texas A\&M University}
\affil[2]{McCombs School of Business, The University of Texas at Austin}
\begin{document}

\maketitle

\begin{abstract}
A decision tree is commonly restricted to use a single hyperplane to split the covariate space at each of its internal nodes. It often requires a large number of nodes to achieve high accuracy, hurting its interpretability. In this paper, we propose convex polytope trees (CPT) to expand the family of decision trees by an interpretable generalization of their decision boundary. The splitting function at each node of CPT is based on the logical disjunction of a community of differently weighted probabilistic linear decision-makers, which also geometrically corresponds to a convex polytope in the covariate space. We use a nonparametric Bayesian prior at each node to infer the community's size, encouraging simpler decision boundaries by shrinking the number of polytope facets. We develop a greedy method to efficiently construct CPT and scalable end-to-end training algorithms for the tree parameters when the tree structure is given. We empirically demonstrate the efficiency of CPT over existing state-of-the-art decision trees in several real-world classification and regression tasks from diverse domains.

\end{abstract}

\section{INTRODUCTION}

% introduction: 1-importance of the interpretablity machine learning models in medical science and decison making.
% 2- why trees are interpretable, but not good performance as deep learning methods
% 3- we make the gap of the performance between trees and deep learning methods smaller by allowing more felexible decistion boundries which still can be interpretable, convex polytops
% 4- To regularize the decsition boundries we use non-paramteric bayes approch which helps to avoid using so many hyperplanes for the polytop especially when the number of data point is small. explaining gamma process and how it can be similar to the just simple decision tree, because of the normal-invgamma

Decision trees \citep{breiman1984classification} are highly interpretable models,  which make them favorable in high-stakes domains such as medicine \citep{valdes2016mediboost, martinez2018circadian}, criminal justice \citep{bitzer2016analyse}, and network analysis \citep{backstrom2006group, armandpour2019robust}. They are also resistant, if not completely immune, to the inclusion of many irrelevant predictor variables. However, trees usually do not have high accuracy, which somewhat limits their use in practice.
%prevents them from being an ideal machine learning method. 
Current main approaches to improve the performance of decision trees are making large trees or using ensemble methods \citep{dietterich2000experimental, dietterich2000ensemble,hastie2009elements,zhou2012ensemble}, such as bagging 
%(such as random forest)
\citep{breiman2001random} and boosting \citep{drucker1996boosting, freund1997decision}, % methods \citep{dietterich2000experimental, dietterich2000ensemble}, 
which all come with the price of harming model interpretability. There is a trade-off challenge between the accuracy and interpretability of a decision tree model.

% \mz{add citation to the original source of boosted tree} 
% \mz{add citations to classic textbooks on ensemble methods} 

% In particular, boosting and bagging methods use ensemable of trees which make it unclear how the predicted outcome is reached. 
% And, growing the very deep trees also tend to learn highly irregular patterns which results in overfiting of the training sets. 

% Mirrors human decision making more closely than other approaches

% Oblique decision trees \citet{murthy1994system, heath1993induction} are a class of decision trees that improve the accuracy without damaging interpretability. They provide better accuracy by generalization of decision boundaries. In the classical tree setup, the decision boundaries are restricted to axis-parallel hyperplanes while oblique trees do not make the axis-parallel restriction and let the hyperplane has any arbitrary orientation. Notice, oblique trees are still interpretable because of the hyperplane form of the boundaries. Most of the follow-up work has focused on how to train a (oblique) tree \citet{carreira2018alternating,wickramarachchi2016hhcart, bertsimas2017optimal}, or ensemble of them (which loose high interpreatblity property). However, in this paper, we take a new direction by generalizing class of oblique trees by allowing more flexible decision boundaries, which each decision is interpretable.

Prior work has attempted to address the aforementioned challenge and improve the performance of trees by introducing oblique tree models \citep{heath1993induction, murthy1994system}. These families of models are generalizations of classical trees, where the decision boundaries are hyperplanes, each of which is not constrained to be % without the constraint of being 
axis-parallel and can have an arbitrary orientation. This change in the decision boundaries has shown to reduce the size of the trees. However, the tree size is still often too large in a real dataset to make it amenable to interpretation. 
%simple to interpret. %be interpratable 
 There has been an extensive body of research to improve the training of the oblique trees and enhance their performance \citep{wickramarachchi2016hhcart, bertsimas2017optimal,carreira2018alternating, lee2019locally}, yet their large size remains a challenge.

In this paper, we propose convex polytope decision trees (CPT) to expand the class of oblique trees by extending hyperplane cuts to more flexible geometric shapes. To be more specific, the decision boundaries induced by each internal node of CPT are based on noisy-OR \citep{pearl2014probabilistic} of multiple linear classifiers. And since noisy-OR has been widely accepted as an interpretable Bayesian model \citep{richens2020improving}, our generalization keeps the interpretability of oblique trees intact. Furthermore, CPT's decision boundaries geometrically resemble a convex polytope ($i.e.$, high dimensional convex polygon). Therefore, the decisions at each node have both logical and geometrical interpretation.
We use the gamma process \citep{ferguson1973bayesian, kingman1992poisson,zhou2018parsimonious}, a nonparametric Bayesian %shrinkage 
prior, to infer the number of polytope facets adaptively at each internal tree node and regularize the capacity of the proposed CPT. A realization of the gamma process consists of countably infinite atoms, each of which is used to represent a weighted hyperplane of a convex polytope. The shrinkage property of the gamma process helps us to encourage having simpler decision boundaries, therefore help resist overfitting and improve interpretability.
%of the model. 

The training of CPT, like that of oblique trees, is a challenging task because it requires learning both the structure ($i.e.$, the topology of the tree and the cut-off for the decision boundaries) and the parameters ($i.e.$, parameters of noisy-OR). The structure is a discrete optimization problem, involving the search over a potentially large problem space. In this work, we present two fully differentiable approaches for learning CPT models, one based on mutual information maximization, applicable for both binary and multi-class classification, and the other based on variance minimization, applicable for regression. The differentiable training allows one to use modern stochastic gradient descent (SGD) based programming frameworks and optimization methods for learning the proposed decision trees for both classification and regression.
%\mz{are we showing the results for both?}

Experimentally, we compare the performance of CPT to state-of-the-art decision tree algorithms \citep{carreira2018alternating, lee2019locally} on %several benchmark %predictive tasks in 
%several 
a variety of representative
regression and classification tasks. We experiment with several real-world datasets from diverse domains, such as computer vision, tabular data, and chemical property data. Experiments demonstrate that CPT outperforms state-of-the-art methods with higher accuracy and smaller size. 
%therefore clearly enhancing the interpretability of the decision tree models.

Our main contributions include: 1) We propose an interpretable generalization to the family of oblique decision trees models; 2) We regularize the expressive power of CPT, using a nonparametric Bayesian shrinkage prior for each node split function; 3) We provide two scalable and differentiable ways of learning CPT models, one for classification and the other for regression, which efficiently search for the optimal tree; 4) We experimentally evaluate CPT on several different types of predictive tasks, % and datasets,
illustrating that this new approach outperforms the prior work in having  higher
accuracy achieved with a smaller size.

\section{RELATED WORK}
Interperatble inference is of utmost importance (e.g. \citet{ahmad2018interpretable, zhang2018interpretable, sadeghian2019drum}), among which decision trees are popular. Most of the literature on decision trees has been focused on how to train a single tree or an ensemble of multiple ones \citep{hastie2009elements}. There has been little work in making decision boundaries more flexible. One of the reasons for this lack of research is the fact that the computational complexity of the problem increases even for simple generalization of the decision boundaries. For example, with $N$ as the number of data points and $d$ as the dimension of the input space, the generalization of the coordinate wise to an oblique hyperplane cut, increases the number of possible splits of data points from $N d$ to $\sum_{i=0}^{d} {{N}\choose{i}}$ just for a single node \citep{vapnik16ja}.

Some methods perform hyperplane cuts in an extended feature space, created by concatenating the original features and newly generated ones~\citep{ahmad2014decision}, to get more flexible decision boundaries. These new features can be engineered or kernel-based and are not designed for interpretability, but to gain performance in an ensemble of such trees using Random Subspaces \citep{ho1998random}. We will follow this section by a literature review of the training algorithms for (oblique) trees.

Conventional methods for decision tree induction are greedy, where they grow the tree nodes one at a time. The greedy construction of oblique trees can be done by using coordinate descent to learn the parameters of each split \citep{murthy1994system}, or by a projection of the feature space to a lower dimension then using coordinate-cut \citep{menze2011oblique, wickramarachchi2016hhcart}. However, the greedy procedure often leads to sub-optimal trees. 

There have been several attempts to non-greedy optimization, which rely on either fuzzy or probabilistic split functions \citep{suarez1999globally, jordan1994statistical, kontschieder2015deep}. The probabilistic trees are sometimes referred to as soft decision trees \citep{frosst2017distilling} and have been applied to computer vision problems \citep{kontschieder2015deep, hehn2019end}. In these methods, the assignment of a single sample to the leaf is fuzzy or probabilistic, and gradient descent is used for the optimization of the tree. Most of these algorithms remain probabilistic at the test time, which consequently leads to uninterpretable models as the prediction for each sample will be based on multiple leaves of the tree instead of just one. There are no probabilistic trees in the literature developed for the regression task to the best of our knowledge.

Other advances towards the training of an oblique tree are based on constructing neural networks that reproduce decision trees \citep{yang2018deep, lee2019locally}.  \citet{yang2018deep} use a neural network with argmax activations for the representation of classic decision trees with coordinate cuts, but they are not scalable to high-dimensional data. \citet{lee2019locally} use the gradient of a ReLU network with a single hidden unit at each layer and skip-connections to construct an oblique decision tree. They achieve state-of-the-art results on some molecular datasets, but they have to make complete trees for a given depth and need high-depth trees.

In contrast to our method, there are other training algorithms that require the structure of the tree at the beginning. Some of the works in this direction like \citet{bennett1994global} and \citet{bertsimas2017optimal} use linear programming, or mixed-integer linear programming, to find a global optimum tree. Therefore, these methods are computationally expensive and not scalable. \citet{norouzi2015efficient} derive a convex-concave upper bound on the tree’s empirical loss and optimize that loss using SGD. A recent work \citep{carreira2018alternating} proposes tree alternating optimization, where one directly optimizes the misclassification error over separable subsets of nodes, achieving the state-of-the-art empirical performance on some datasets \citep{zharmagambetov2019experimental} .

We conclude this section by relating our splitting rule at each internal node to some relevant classification algorithms~\citep{aiolli2005multiclass, manwani2010learning, manwani2011polyceptron, wang2011trading, kantchelian2014large, zhou2018parsimonious}. The two most related works are convex polytope machine (CPM) \citep{kantchelian2014large} and infinite support hyperplane machine (iSHM)  \citep{zhou2018parsimonious}, which both exploit the idea of learning a convex polytope associated decision boundary. In particular, iSHM can be considered as a decision stump (a tree of depth one) of CPT. iSHM is like a single hidden layer NN, which provides different values for different input features (at test time, therefore less interpretable) and is restricted to the binary classification task. However, CPT's decision function at test time, at each node, just assigns two values to each feature space to send the data to the right or left. We stack those binary classifiers in the form of a decision tree. This provides a locally constant function for any task with a differentiable objective. Thus it is not necessarily restricted to the binary classification task.

\section{CONVEX POLYTOPE TREE AND ITS INFERENCE ALGORITHMS}

Suppose we are given the training data $(\Xmat, \Ymat)= \{(\xv_n, y_n)\}_{n=1}^N$, where pairs of $(\xv_n, y_n)$ are drawn independently from an identical and unknown distribution $D$. Each $\xv_n \in \mathbb{R}^d$ is a $d$-dimensional data with a corresponding label $y_n \in \mathcal{Y}$. In the classification setting, $\mathcal{Y}=\{1, \cdots, K\}$
and in regression scenario $\mathcal{Y}= \mathbb{R}$. The aim is to learn a function
 $F:\mathbb{R}^d \rightarrow  \mathcal{Y}$ that will perform well in predicting the label on samples from $D$.
 
Decision tree methods construct the function $F$ by recursively partitioning the feature space to yield a number of hierarchical, disjoint regions, and assign a single label (value) to each region. The final tree is comprised of branch and leaf nodes, where the branch nodes make the splits and leaf nodes assign values to each related region. For both classical~\citep{breiman1984classification,quinlan1986induction,quinlan2014c4} and oblique trees~\citep{murthy1993oc1, murthy1994system}, the decision boundary at each branch node can be expressed as whether $\betav \xv>0$ or $\betav \xv \leq 0$. We do not explicitly consider the bias term in the decision boundary because we assume, for the sake of notational brevity, that $\xv$ includes a constant unitary component corresponding to a bias term. The $\betav$, in the case of classical trees, is limited to having just one coordinate equal to one and the rest equal to zero, other than the coordinate corresponding to bias. However, oblique trees do not make any restriction on~$\betav$. In what follows, we will explain how we move beyond %generalize on 
the oblique decision trees.

\subsection{Convex Polytope Constrained Decision Boundary}

By extending the idea of disjunctive interaction (noisy-OR, also commonly referred to as probabilistic-OR) \citep{shwe1991probabilistic, jaakkola1999variational, zhou2018parsimonious} from probabilistic reasoning to the decision tree problem, we make the decision boundaries more flexible while preserving interpretability. To that end, we transform the problem of a node splitting, right or left, to a committee of experts that make individual binary decisions (``Yes'' or ``No''). 
Note the probabilistic-OR construction shown below, while being closely related to iSHM \citep{zhou2018parsimonious}, is distinct from it in that the Yes/No decisions are latent rather than observed binary labels. 
%Similar to a logical OR, 
The committee votes ``Yes'' if and only if at least one expert votes ``Yes'', otherwise votes ``No''. Thus, the final vote at each node is
$$
vote = \bigvee\nolimits_{i=1}^K vote_i,
$$
where $\bigvee$ denotes the logical OR operator. We model each expert as a linear classifier who votes ``Yes'' with probability
\beq \label{eq:each_person}
\textstyle P({vote}_i = \text{``Yes''}\given \{r_i,\betav_i \} ,\xv ) = 1 - ({1+e^{\betav_i'\xv}})^{-r_i} ,
\eeq
where $r_i\geq 0$ and $\betav_i \in \mathbb{R}^d$ are parameters of expert $i$. Now assuming that each expert votes independently, we can express the probability of the committee voting ``Yes'' as
\begin{dmath}\label{eq:com_v}
\textstyle P(vote =\text{``Yes''}\given \{r_i,\betav_i\}_i,\xv) = 1-\prod\nolimits_{i=1}^K(1-p_{i})=1-e^{-\sum_{i=1}^{K}r_i \ln{(1+e^{\betav_i'\xv})}},
\end{dmath}
where $p_i$ is the probability of expert $i$ voting ``Yes'' and $K$ is the total number of experts. 
We can now define the split function at each node by thresholding the committee voting probability:
$$A_{\textit{left}}:=\{ \xv \given \xv \in \mathbb{R}^d, \textstyle P(vote =\text{``Yes''}\given \{r_i,\betav_i\}_i,\xv)\leq q_{\textit{thr}}\}$$

where $A_{\textit{left}}$ and $A_{\textit{right}} :=\mathbb{R}^d \setminus A_{\textit{left}}$ are the related splits of the space.

To elaborate on the geometric shape and interpretability of the decision boundaries,~consider $K=1$ ($i.e.$ a single expert). In this scenario, the decision boundary becomes a hyperplane, which is perpendicular to $\betav$. In fact, the probability function for each expert is based on the signed distance of $\xv$ to the hyperplane perpendicular to $\betav$. And, parameter $r$ controls how smoothly the probability transitions from 0 to 1, where a larger $r$ leads to sharper changes. This class of models with $K=1$ and $r=1$ are identical to oblique trees, which are interpretable. The interpretability of $K\geq 1$ is provided by the fact that linear classifiers and probabilistic-OR operation are interpretable \citep{richens2020improving}.

To geometrically analyze the implied decision regions, we provide the following theorem.
\begin{theorem}\label{thm:polytope}
For any $\{r_i,\betav_i\}_{i=1}^K$, such that $r_i \in \mathbb{R}_+ $ and $\beta_i \in \mathbb{R}^d $, let: 
$$
A_{\textit{left}}=\{ \xv \given \xv \in \mathbb{R}^d,  ~~ \textstyle f(\xv)\leq q_{\textit{thr}} \} ,
$$
where:
\beq
% \textstyle P(vote =\text{``Yes''}\given \{r_i,\betav_i\}_i,\xv) = 1-e^{-\sum_{i=1}^{K}r_i \ln{(1+e^{\betav_i'\xv})}},
\textstyle f(\xv) = 1-e^{-\sum_{i=1}^{K}r_i \ln{(1+e^{\betav_i'\xv})}},
\eeq
then $A_{\textit{left}}$ is a convex set, confined by a convex polytope.
\end{theorem}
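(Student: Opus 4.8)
The plan is to strip off the monotone outer transformation, reduce the claim to a sublevel-set statement for a convex function, and then exhibit the enclosing polytope explicitly. First I would rewrite the defining inequality: set $g(\xv) := \sum_{i=1}^K r_i \ln(1+e^{\betav_i'\xv})$, so that $f(\xv) = 1 - e^{-g(\xv)}$ and $g(\xv)\ge 0$ everywhere. Since $t\mapsto 1-e^{-t}$ is strictly increasing, the condition $f(\xv)\le q_{\textit{thr}}$ is equivalent to $g(\xv)\le c$ with $c := -\ln(1-q_{\textit{thr}})$ whenever $q_{\textit{thr}}\in[0,1)$; the boundary cases $q_{\textit{thr}}<0$ and $q_{\textit{thr}}\ge 1$ give $A_{\textit{left}}=\emptyset$ and $A_{\textit{left}}=\R^d$, both trivially convex, so I would dispatch them first and then assume $c\ge 0$.

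Next I would establish convexity of $g$. The softplus $s(t)=\ln(1+e^t)$ is convex on $\R$ (for instance because $s''(t)=e^t(1+e^t)^{-2}>0$, or because it is a log-sum-exp), so each map $\xv\mapsto s(\betav_i'\xv)$ is convex as the composition of a convex function with an affine map, and $g=\sum_i r_i\, s(\betav_i'\,\cdot)$ is a conic (nonnegative) combination of convex functions, hence convex. Consequently $A_{\textit{left}}=\{\xv : g(\xv)\le c\}$ is a sublevel set of a convex function, and therefore convex.

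To obtain the enclosing polytope I would use the elementary bound $s(t)=\ln(1+e^t)\ge\max(t,0)\ge 0$. Because every summand of $g$ is nonnegative, for each index $j$ with $r_j>0$ we have $c\ge g(\xv)\ge r_j\, s(\betav_j'\xv)\ge r_j\,\betav_j'\xv$ for all $\xv\in A_{\textit{left}}$, i.e. $\betav_j'\xv\le c/r_j$. Hence
\[
A_{\textit{left}}\ \subseteq\ P\ :=\ \bigcap_{j:\,r_j>0}\bigl\{\xv\in\R^d : \betav_j'\xv\le c/r_j\bigr\},
\]
an intersection of finitely many half-spaces, i.e. a convex polyhedron whose facet normals are exactly the $\betav_j$. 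One can moreover sandwich $A_{\textit{left}}$ from the inside by the sublevel set of the piecewise-linear convex map $\xv\mapsto\sum_j r_j\max(\betav_j'\xv,0)$, which is itself a polyhedron; this makes precise the intuition that the smooth boundary of $A_{\textit{left}}$ is a ``rounded'' version of a convex polytope.

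I do not expect a genuine obstacle in this argument; the only point that needs care is the meaning of ``confined by a convex polytope,'' since the set $P$ above is in general an unbounded polyhedron. It is bounded (a polytope in the strict sense) precisely when $\{\betav_j : r_j>0\}$ positively spans $\R^d$, so I would state the conclusion with ``polytope'' understood in the generalized sense rather than assert boundedness unconditionally.
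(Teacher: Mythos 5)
Your proof is correct and follows essentially the same route as the paper's: strip the monotone map $t\mapsto 1-e^{-t}$ to reduce to the sublevel set of the convex function $g(\xv)=\sum_i r_i\ln(1+e^{\betav_i'\xv})$, then bound each nonnegative summand to obtain one half-space per expert (your constant $c/r_j$ is a slightly looser but equally valid choice than the paper's exact $\ln(e^{c/r_j}-1)$). Your added remarks — the degenerate cases of $q_{\textit{thr}}$ and the observation that the enclosing set is in general an unbounded polyhedron rather than a bounded polytope — are correct refinements the paper does not make explicit.
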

The proof is provided in the Appendix.

The above theorem %\ref{thm:polytope}
shows for $K\geq 1$, the decision region ($A_{left}$) is a convex set confined by a convex $K$-sided polytope. More precisely, each facet of the convex polytope is a hyperplane corresponding to an expert perpendicular to its $\betav$. Also worth noting, an expert with a larger $r$ has more effect on the decision boundary, making sharper changes to the probability function. This can also be perceived as the value of their decision in the committee. Therefore, our method not only has a strong relationship with probabilistic-OR type models that provide interpretability for the model parameters \citep{almond2015bayesian}, but also has decision boundaries with interpretable geometric characteristics. We propose a class of models, Convex Polytope Trees (CPT), where each node of the tree follows the above splitting function.

\subsection{Gamma Process Prior } \label{sec:gamma}

To regularize CPT, and motivate simpler decision boundaries we use a nonparamteric Bayesian shrinkage prior. Specifically, we put the gamma process prior  \citep{ferguson1973bayesian,kingman1992poisson,zhou2018parsimonious} on the splitting function of each node in the tree. Each realization of the gamma process, consisting of countably infinite weighted atoms whose total weight is a finite gamma random variable, can~be~described~as~\beq G=\sum\nolimits_{i=1}^\infty r_i \delta_{\betav_i} , ~~\text{such that} ~~\betav_i \in  \mathbb{R}^d, r_i \in \mathbb{R}_+ \eeq  

where $\betav_i$ represents an atom with weight $r_i$. More details about the gamma process can be found in \citet{kingman1992poisson}.
%wolpert1998poisson, broderick2018posteriors}.
We put the prior on the CPT by considering $\betav_i$ and $r_i$ as the parameters of the splitting function related to equation \eqref{eq:com_v}. Due to the gamma process's inherent shrinkage property, just a small finite number of experts will have non-negligible weights $r_i$ at each node. This behavior encourages the model to have simpler decision boundaries ($i.e.$ smaller number of experts or equivalently fewer polytope facets) at each node. This improves the interpretability and regularization of the model. The gamma process allows a potentially infinite number of experts at each node. For the convenience of implementation, we truncate the gamma process to a large finite number of atoms.

To further encourage simpler models at each node of the tree, we also put a shrinkage prior on $\betav$ of each expert. In particular, we consider the prior:
\beq \betav_{i}\sim\prod\nolimits_{j=0}^{d}\textstyle{\int} \mathcal{N}(\beta_{ji};0,\sigma_{ji}^{2}) 
\mbox{InvGamma}
(\sigma_{ji}^{2};a_\beta,1/b_{\beta i})d \sigma_{ji}^{2}\eeq
and $b_{\beta i}\sim \mbox{Gamma}(e_0,1/f_0)$
which motivates sparsity due to the InvGamma distribution on the scale parameter~\citep{tipping2001sparse, zhou2018parsimonious}.

\subsection{Training Algorithm}

Finding an optimal CPT requires solving a combinatorial, non-differentiable optimization problem. This is due to the large number of possibilities that any single node can separate the data. We propose a continuous relaxation of the splitting rule of each node to alleviate this computationally challenging task. Particularly, each internal node makes probabilistic rather than deterministic decisions to send samples to its right or left branch. We set the probability of going right the same as \eqref{eq:com_v}, or any monotonic function of it. We use this probabilistic version to train the tree in a differentiable manner. At the test time, we threshold the splitting functions to provide a deterministic tree. Below we explain in detail the proposed training algorithm for the parameters and structure of the tree.

\subsubsection{Learning Split Parameters}\label{sec:training}

Assuming the tree structure is given, we first explain how to infer the tree parameters.

For classification, we formulate the training as an optimization problem by considering the mutual information between the two random variables $\Ymat$ (category label) and $\Lmat$ (leaf id) as our objective function. This may seem similar to previous literature on learning a classical decision tree but it differs in two main ways: 1) we develop and optimize the mutual information for a probabilistic rather than deterministic tree, and 2) we  learn the parameters of all nodes jointly rather than learning them in~a~greedy~fashion.

We model our probabilistic tree by letting
\beq\label{eq:tree_main}
\begin{aligned}
\ell_n  \sim P_{\thetav}(\ell_n \given \xv_n) \quad \quad  \text{such that} \quad \quad \ell_n \in  S_{\textit{leaf}},
\end{aligned}
\eeq
where $S_{\textit{leaf}}$ is the set of all leaf nodes and $ P_{\thetav}(\ell_n \given \xv_n)$ is the probability of arriving at leaf $\ell$ given the sample feature $\xv_n$. We assume each internal node makes decisions independent of the others and use the probabilities in \eqref{eq:com_v} when sending a data sample to the left or right branch. This assumption lets us derive a formula for $P_{\thetav}$ as

\beq\label{eq:tree_par}
    P_{\thetav}(\ell_n \given \xv_n) =\prod\nolimits_{(v, d_v)  ~\in  \nuv_{\ell_n}} q_{\thetav_v}(d_v \given \xv_n ),
\eeq
where
\begin{align}
    q_{\thetav_v}(d_v = 1 \given \xv_n )&=  1 - q_{\thetav_v}(d_v = 0 \given \xv_n )\nonumber\\ &= 1-e^{-\sum_{i=1}^{K}r^{(v)}_i \ln{(1+e^{\betav^{(v)'}_i\cdot\xv})}}
\end{align}
and $\nuv_{\ell_n}$ is a path from the root to leaf $\ell_n$ and $d_{v} \in \{0,1\}$ encodes the right or left ($0$ or $1$) direction taken at node $v$. The mutual information between $\Ymat$ and $\Lmat$ can be expressed as
\begin{align}\label{eq:entropy}
\mathcal{I}(\Ymat, &\Lmat) = \mathcal{H}(\Ymat)- \mathcal{H}(\Ymat \given \Lmat) \nonumber \\ &= \mathcal{H}(\Ymat)- \sum\nolimits_{ \ell \in  S_{leaf}} p(\Lmat= \ell) \mathcal{H}(\Ymat \given \Lmat= \ell),
\end{align} 
where $\mathcal{H}(\cdotv)$ indicates the entropy of a random variable, and $\Ymat \given \Lmat $ follows a categorical distribution. Notice that, since $\mathcal{H}(\Ymat)$ does not depend on the tree parameters, to optimize mutual information, we only need to minimize $\mathcal{H}(\Ymat \given \Lmat)$. However, the evaluation of the conditional entropy term requires knowledge of the entire data distribution, thus we can not directly optimize~\eqref{eq:entropy}. 

To make the training possible, we approximate the true data distribution with the empirical one to get
\beq \label{eq:denom}
\hat{p}(\Lmat= \ell) = \textstyle \sum_{n=1}^N P_{\thetav}(\ell \given \xv_n)/N.
\eeq
Denote $\mathbf{1}_{[\cdotv]}$ as an indicator function.  By using Bayes' rule, we derive  $\hat{\piv}^{\ell}=(\hat{\pi}^{\ell}_{1}, \cdots, \hat{\pi}^{\ell}_{C})$,  the estimated probability vector of~the~categorical~distribution~for $\Ymat \given \Lmat= \ell$, as
\beq \label{eq:H}
\hat{\pi}^{\ell}_{j} = \frac{\sum_n \mathbf{1}_{[y_n=j]} P_{\thetav}(\ell \given \xv_n)}{\sum_n P_{\thetav}(\ell \given \xv_n)},~~j=1,\ldots,C\,\,\,.
\eeq

Now by using~\eqref{eq:H}, we can approximate the entropy term  $\mathcal{H}(\Ymat \given \Lmat= \ell)$ as
\beq
\mathcal{\hat{H}}(\Ymat \given \Lmat= \ell) = - \sum\nolimits_{j=1}^C \hat{\pi}^{\ell}_{j} \log (\hat{\pi}^{\ell}_{j}).
\eeq
Therefore, we can provide an estimator for the $\mathcal{H}(\Ymat \given \Lmat)$ as
\beq \label{eq:loss}
\mathcal{\hat{H}}(\Ymat \given \Lmat) = \sum\nolimits_{ \ell \in  S_{leaf}} \hat{p}(\Lmat= \ell) \mathcal{\hat{H}}(\Ymat \given \Lmat= \ell).
\eeq
By minimizing $\mathcal{\hat{H}}(\Ymat \given \Lmat)$ with respect to $\thetav$, we are in fact maximize the mutual information $\mathcal{\hat{I}}(\Ymat, \Lmat)$. As discussed in Section \ref{sec:gamma}, we also regularize CPT by adding a penalty term to \eqref{eq:loss}. We consider the negative log probability of the gamma process prior truncated with $K$ atoms~by~letting $$r_1,\ldots,r_K\stackrel{iid}\sim\mbox{Gamma}(\gamma_0/K,1/c_0)$$ where $r_1,\ldots,r_K$ are the parameters of internal nodes splitting function.

The penalty term for each internal node can be mathematically formulated as

\begin{dmath}
\centering
\textstyle\sum_{k=1}^K\left(-(\frac{\gamma_0}{K}-1)\ln r_k +c_0 e^{\ln r_k}\right)  +\textstyle(a_\beta+1/2)\sum_{j=0}^d\sum_{k=0}^K [ \ln(1+\beta_{jk}^2/(2b_{\beta k}) ) ]. 
\end{dmath}

The above procedure provides a differentiable way of learning branch node parameters, which dictates how a data sample will arrive at a leaf node. At the end of the training algorithm, we also need to assign the leaf node parameters, which determine how the tree predicts a sample. We pass the whole training set through the tree and assign the empirical distribution of all categories at each leaf node as its node parameters. This way of determining the leaf parameters has been shown  to achieve the highest AUC in binary-classification \citep{ferri2002learning}.

For regression, we replace the mutual information optimization by a variance reduction criteria. To be more specific, we learn the tree parameters with % by minimizing
\beq\label{eq:regg}
\argmin_{\thetav} \{ \sum\nolimits_n P_{\thetav}(\ell \given \xv_n) (y_n-\hat{\mu}^{\ell})^2\}\eeq
such that
\beq
\hat{\mu}^{\ell} = \frac{\sum_n P_{\thetav}(\ell \given \xv_n) y_n}{\sum_n P_{\thetav}(\ell \given \xv_n)}
\eeq

and the calculation of $P_{\thetav}$ is exactly the same as in the classification case. The leaf parameters are set as the mean response of the points arriving at that leaf. 

Note for both scenarios, we use the threshold of 0.5 to do deterministic splits at the test time. Now that we know how to train a tree given its structure and how to use it at the test time, the next section will describe how to find an optimal topology and initial parameters. Algorithm 1 of the Appendix summarizes the training~method.

\subsubsection{Topology Learning}

We start by assuming the tree structure to be just a root node and its two child leaf nodes. We train this tree using the algorithm explained in Section~\ref{sec:training}. After training, we split the training set to two subsets (right and left), by thresholding the assigned probability. We calculate this threshold in the classification (regression) task, which achieves maximum mutual information (variance reduction) in the deterministic tree. To be more specific, for the classification task, we set the threshold $q_{\textit{thr}}$, such that it minimizes
\begin{align*}
\frac{n_0}{N}\cdot\mathcal{H}(\{y_{(i)}\}_{i=1}^{n_0})+ \frac{N-n_0}{N}\cdot \mathcal{H}(\{y_{(i)}\}_{i={n_0+1}}^{N}), \end{align*} where $$\quad p_{(n_0)}\leq q_{\textit{thr}} \leq p_{(n_0+1)}. 
$$
The $p_{(i)}$ is the $i$'th smallest probability assigned by the root node to the data samples. We further split each child node by considering it as root and applying the above algorithm using its data samples. We stop splitting a node with very few data samples and stop growing the tree when we reach a certain predefined maximum depth.

\begin{figure*}

\begin{subfigure}{0.15\textwidth}

  \includegraphics[width=\linewidth]{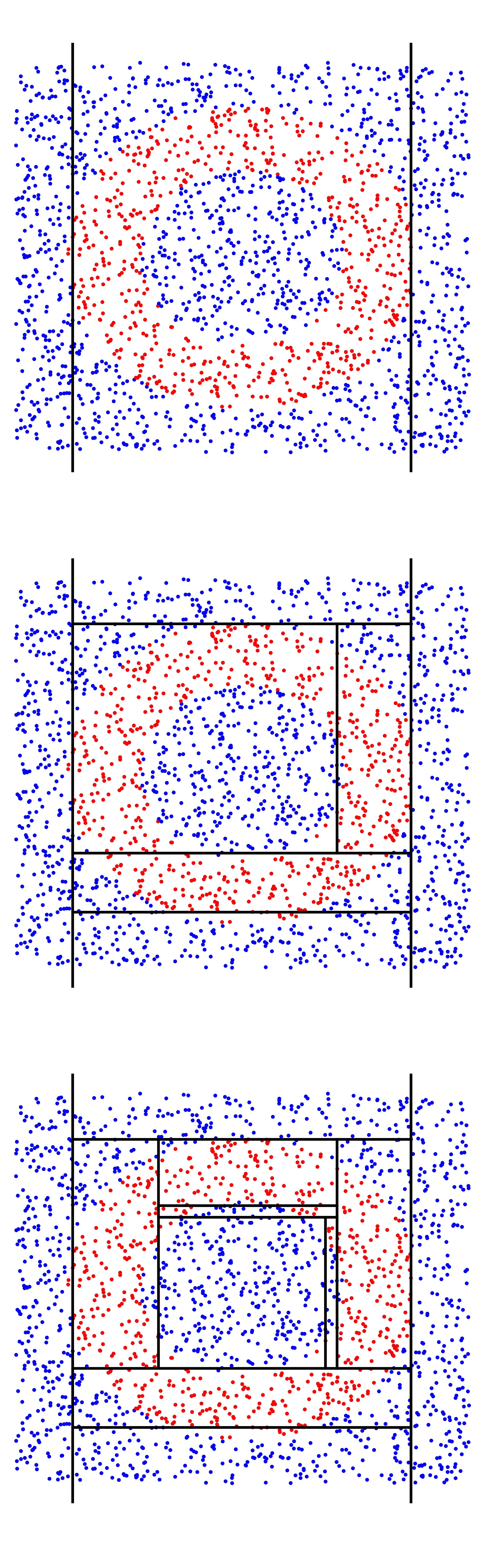}
    \caption{CART}
    \label{fig-a}
 
\end{subfigure}\hfill
\begin{subfigure}{0.14\textwidth}
  \includegraphics[width=\linewidth]{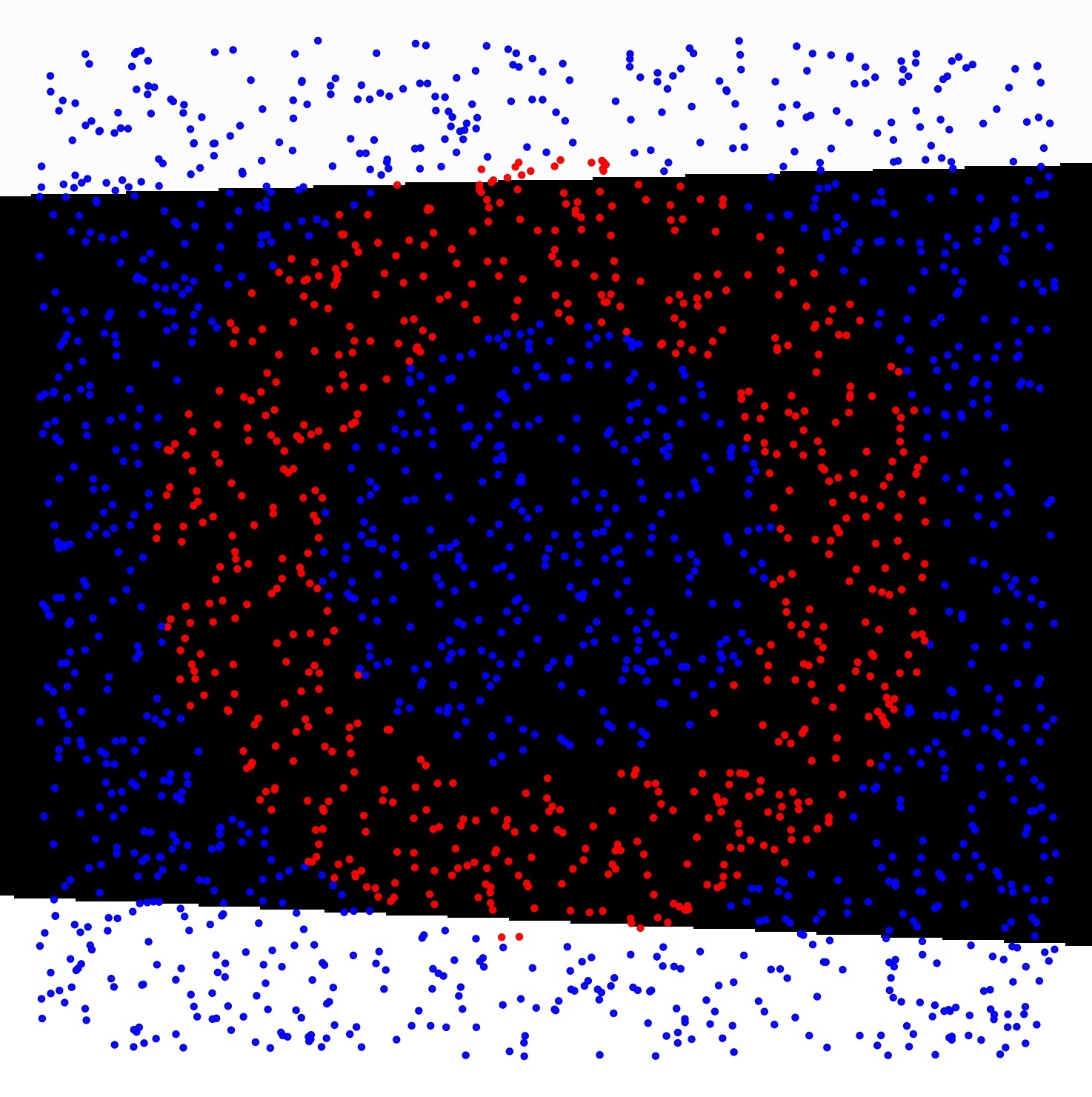}\vspace{2mm}
  \includegraphics[width=\linewidth]{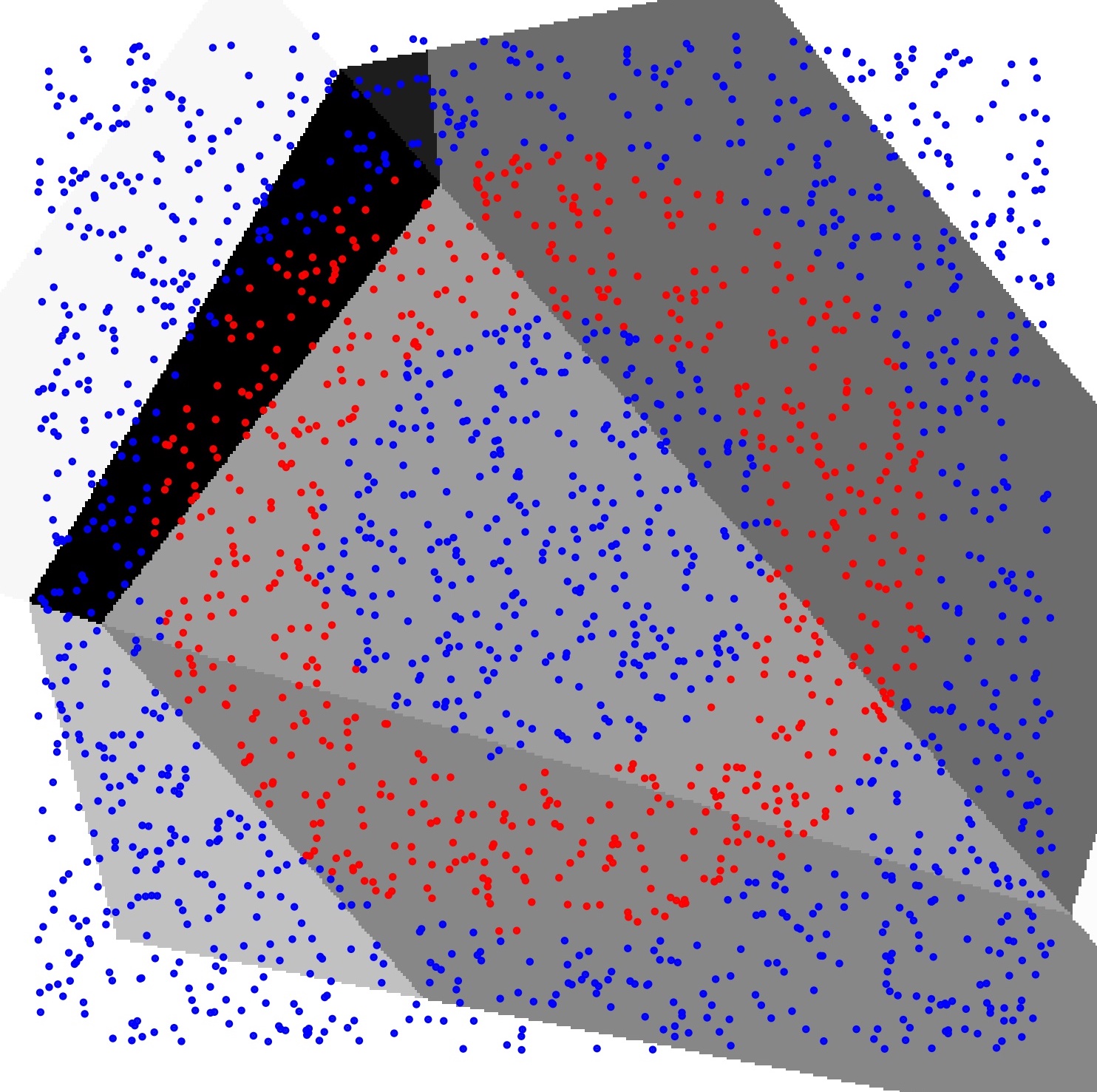}\vspace{2mm}
  \includegraphics[width=\linewidth]{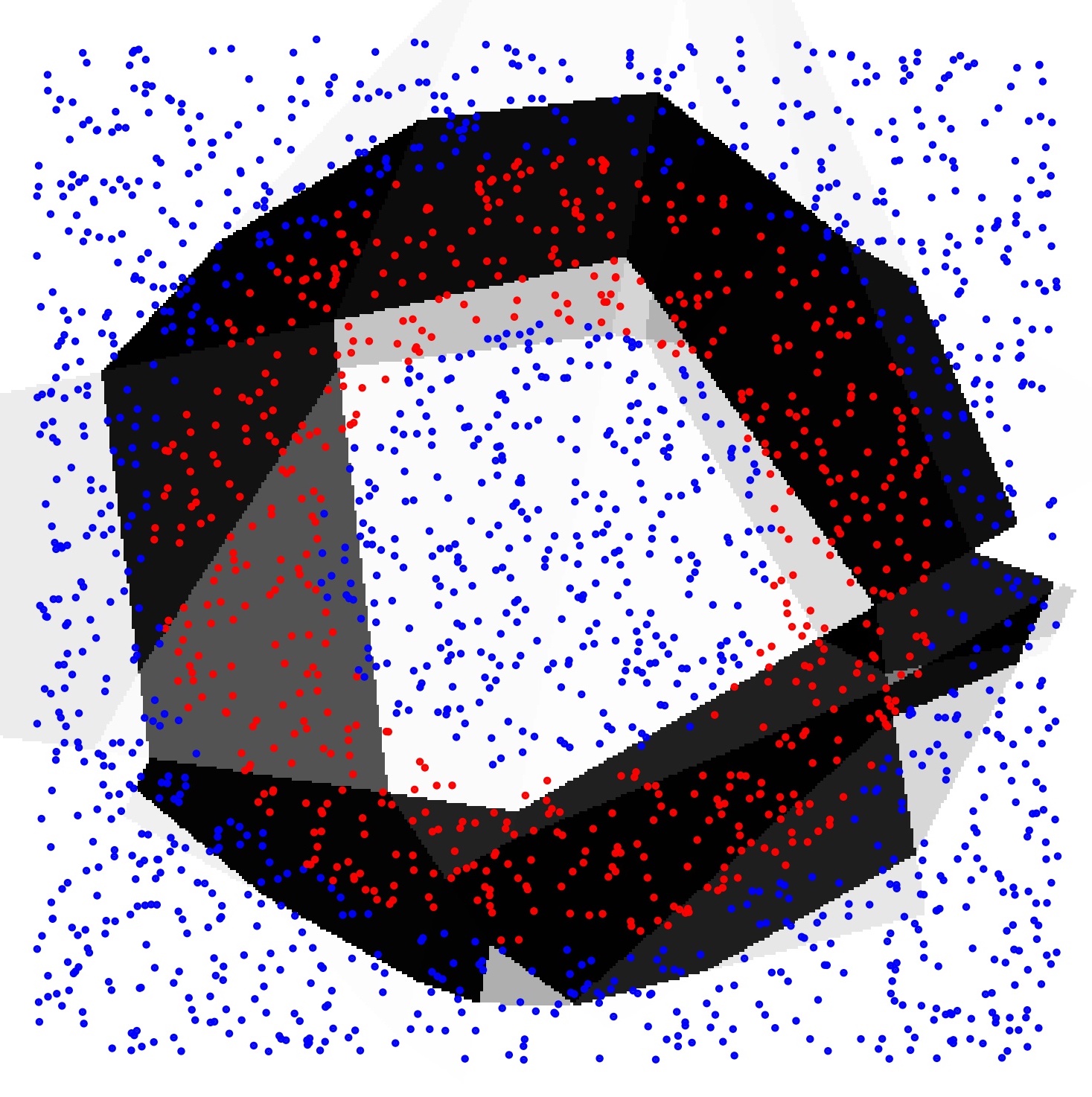}\vspace{3.5mm}
    \caption{LCN}
    \label{fig-b}
 
\end{subfigure}\hfill
\begin{subfigure}{0.48\textwidth}%
  \includegraphics[width=\linewidth]{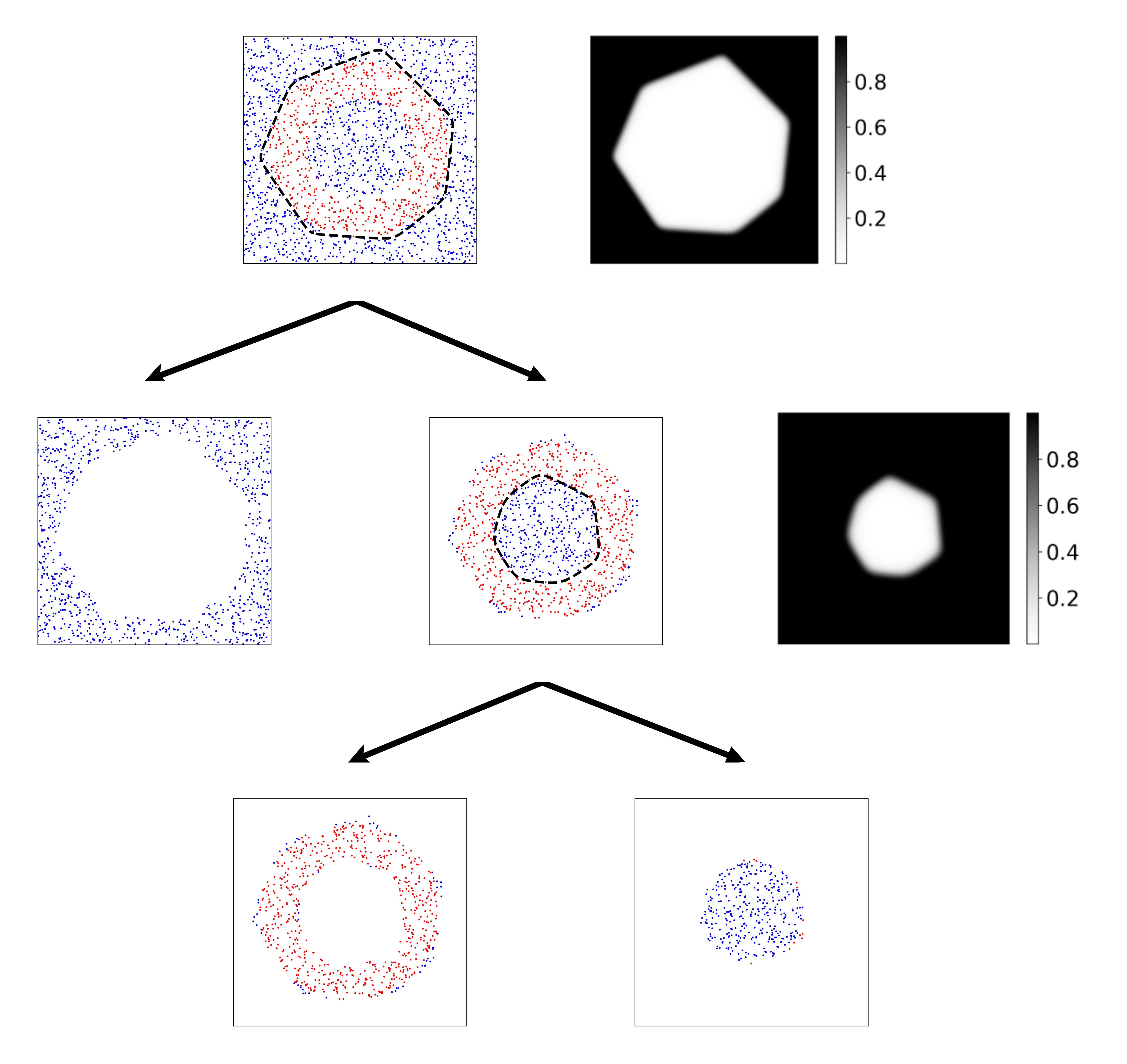}\vspace{2mm}
    \caption{CPT}
    \label{fig-c}
 
\end{subfigure}

\caption{Decision boundary visualizations of CART, LCN (depth 2, 6, and 10) and CPT (depth 2) on the synthetic 2D dataset. Figures \protect\subref{fig-a} and~\protect\subref{fig-b} show how CART and LCN partition the feature space for various depth. Figure \protect\subref{fig-c}, illustrates a CPT, visualizing each node by the data samples arrived at it. At each internal node, a heatmap of the corresponding probabilistic-OR function and the decision boundary (shown by a black dashed line) is presented.} %\vspace{-2mm}
\label{fig:syn}
\end{figure*}

Note that during the proposed tree-structure greedy training, we do not perform any parameter refining using the method presented in Section~\ref{sec:training}. The parameter refining at each step of adding a new node can further improve the accuracy, but it will come with the price of increased  computational complexity.

\section{EXPERIMENTS}

In this section, we empirically assess the qualitative and quantitative performances of CPT on datasets from various~domains. We show that CPT learns significantly smaller trees than its counterparts and makes more robust and accurate predictions. That is partly due to the high variance of the leaf node's prediction in classical (oblique) trees, resulting from fewer data samples in each partition. However, since CPT usually has fewer leaf nodes, each partition has a significant proportion of the dataset.

\begin{table*}[th]
  \caption{Dataset statistics}\label{tab:statistics}\vspace{-1mm}
\begin{adjustbox}{width=1\columnwidth,center}
  \centering
  \begin{tabular}{lcccccccccccccc}
    \toprule
    \small Dataset               & \small {MNIST} & \small {SensIT} & \small {Connect4} & \small {Letter}  & \small {PDBbind} & \small {Bace} & \small {HIV}  \\
    \midrule
    \small Task                  & \multicolumn{4}{c}{Multi-class classification} & \small Regression & \multicolumn{2}{c}{Binary classification}  \\
    \small Number of classes      & \small 10 & \small 3   & \small 3 & \small 26 & \small - & \small 1 & \small 1  \\
    \small Number of data        & \small 70,000 & \small 98,528   & \small 67,557 & \small 20,000 & \small 11,908 & \small 1,513 & \small 41,127  \\
     \small Feature dimension     & \small 784 & \small 100   & \small 126 & \small 16  & \small 2,052 & \small 2,048 & \small 2,048  \\
    \bottomrule
  \end{tabular}
  \end{adjustbox}
  \vspace{-2mm}
\end{table*}

\subsection{Synthetic Dataset}

The aim of this section is to provide an illustrative example of why CPT achieves better performance when compared to other decision tree based algorithms. To that end, we consider a dataset of 2,000 points, as shown in Figure \ref{fig:syn}. The data samples are independent draws of the uniform distribution on a two-dimensional space as $[-1, 1]\times [-1, 1]$, with the data points labeled as red if they lay between two concentric circles, and blue otherwise.

We compare CPT with LCN \citep{lee2019locally}, which is a  state-of-the-art oblique tree method, and CART \citep{breiman1984classification}, which is a canonical axis-aligned tree algorithm. Figure \ref{fig:syn} from left to right shows the decision boundaries of CART, LCN, and CPT, respectively. CART and LCN are trained for three different maximum depth parameters 2, 6, and 10; the corresponding plots are shown from the top to bottom. Due to the large number of nodes in LCN, we do not directly plot its decision boundaries. Instead, we illustrate the regions assigned to each of its leaf nodes with different shades of gray, and the darker the grey, the more red labels in that region. It is worth noting that both LCN and CART are restricted to partition the feature space into disjoint convex polytopes and assign each region to a leaf node. However, CPT does not have such a limitation, and each region can be the result of applying any set of logical operations on a set of convex polytopes. Figure~\ref{fig:syn} clearly shows that both LCN and CART need a large depth to successfully classify the data, while CPT only needs two splits. To be more specific, the AUC results for
CART are $[0.5, 0.708, 0.897]$ with $[3, 7, 11]$ leaves and the AUC results  for LCN are $[0.681, 0.773, 0.959]$ with $[3, 35, 479]$ leaves.

By contrast, CPT achieves the AUC of $0.962$ (the highest score) with just 3 leaf nodes at maximum depth 2. 

Figure~\ref{fig-c} shows that CPT uses a heptagon (7-sided 2D convex polytope) for each split. However, this number was not fixed at training time. We only limit the maximum number of polytope sides to $K=50$, the gamma process truncation level at each node. The model owes this adaptive shrinkage to the gamma process prior, which improves the simplicity and interpretability of~the~model.

\subsection{Classification and Regression}

We evaluate the performance of CPT for regression, binary classification, and multi-class classification tasks. For regression and binary classification, we conduct experiments on datasets from MoleculeNet~\citep{wu2018moleculenet}. We follow the literature to construct features \citep{wu2018moleculenet, lee2019locally} and use the same training, validation, and testing split as~\citet{lee2019locally}. For multi-class classification, 
we perform experiments on four benchmark datasets from LibSVM~\citep{chang2001libsvm}, including MNIST, Connect4, SensIT, and Letter. We employ the provided training, validation, and testing sets when available; otherwise, we create them under the criterion specified in previous works \citep{norouzi2015efficient, hehn2019end}. The datasets statistics are summarized in Table~\ref{tab:statistics}.

\begin{table*}[th]
  \caption{The performance of decision tree algorithms on regression (RMSE, lower is better) and classification (ACC or AUC, higher is better). The superscripts \tstar, \tdag, and \tddag  is used to to show the source of the quoted results as \citep{zharmagambetov2019experimental}, \citep{lee2019locally}, and \citep{hehn2019end}}\label{tab:results_all}\vspace{-2mm}
\begin{adjustbox}{width=1\columnwidth,center}
\centering
 \begin{tabular}{lccccccccccccc}
    \toprule
    \small Dataset     & \small {CART} & \small {HHCART} & \small {GUIDE} & \small {CO2 } & \small { FTEM} & \small {TAO }& \small {LCN } & \small {CPT ($K$=1)} & \small {CPT}\\
    \midrule
    \midrule
    % \textsc{CPT EM}  & \small -  & \small  -  & \small  -  & \small -  & \small - & \small  -  & \small -  & \small - \\
    \small MNIST\scriptsize(ACC)   & \small 88.05{\scriptsize{$\pm$0.02}}\tstar  & \small  90.1{\scriptsize{$\pm$1.2}}  & \small   78.52{\scriptsize{$\pm$0.20}}\tstar  & \small 90{\scriptsize{$\pm$-}}\tstar  & \small 96.12{\scriptsize{$\pm$-}}\tddag  & \small 94.74{\scriptsize{$\pm$0.11}}\tstar  & \small 93.81{\scriptsize{$\pm$0.32}}  & \small 95.74{\scriptsize{$\pm$0.10}} & \small \textbf{97.01{\scriptsize{$\pm$0.20}}}\\
    \midrule
    \small SensIT \scriptsize(ACC)       & \small 81.71{\scriptsize{$\pm$0.01}}\tstar  & \small  -  & \small  79.25{\scriptsize{$\pm$0.33}}\tstar   & \small 82{\scriptsize{$\pm$-}}\tstar   & \small 81.61{\scriptsize{$\pm$-}}\tddag   & \small 85.12{\scriptsize{$\pm$0.20}}\tstar   & \small 84.06{\scriptsize{$\pm$0.32}}   & \small 84.87{\scriptsize{$\pm$0.27}}  & \small \textbf{85.77{\scriptsize{$\pm$0.55}} }\\
     \midrule
    \small Connect4 \scriptsize(ACC)   & \small 78.29{\scriptsize{$\pm$0.21}}\tstar & \small  -  & \small  72.01{\scriptsize{$\pm$0.36}}\tstar  & \small 78{\scriptsize{$\pm$-}}\tstar  & \small 80.51{\scriptsize{$\pm$-}}\tddag  & \small \textbf{81.09{\scriptsize{$\pm$0.39}}}\tstar  & \small 80.79{\scriptsize{$\pm$0.12}}  & \small 78.89{\scriptsize{$\pm$0.23}} & \small \textbf{81.16{\scriptsize{$\pm$0.51}}}\\

     \midrule
    \small Letter \scriptsize(ACC)        & \small 86.07{\scriptsize{$\pm$0.14}}\tstar & \small  83.1{\scriptsize{$\pm$0.3}}  & \small  82.65{\scriptsize{$\pm$0.9}}\tstar  & \small 87{\scriptsize{$\pm$-}}\tstar  & \small 86.31{\scriptsize{$\pm$0.21}}  & \small 89.15{\scriptsize{$\pm$0.88}}\tstar  & \small 89.64{\scriptsize{$\pm$0.75}}  & \small 84.13{\scriptsize{$\pm$0.32}}  & \small \textbf{90.51{\scriptsize{$\pm$0.81}}}\\

    \midrule
    \small PDBbind \scriptsize(RMSE)       & \small 1.573{\scriptsize{$\pm$0.00}}\tdag & \small  1.530{\scriptsize{$\pm$0.00}}\tdag & \small  -  & \small -  & \scriptsize Not Applicable  & \small -  & \small 1.508{\scriptsize{$\pm$0.017}}\tdag  & \small 1.453{\scriptsize{$\pm$-0.006}} & \small \textbf{1.413{\scriptsize{$\pm$0.004}}}\\

    \midrule
     \small Bace \scriptsize(AUC)       & \small 65.2{\scriptsize{$\pm$2.4}}\tdag & \small  54.5{\scriptsize{$\pm$1.6}}\tdag  & \small  -  & \small -  & \small 81.03{\scriptsize{$\pm$1.5}}  & \small 73.4{\scriptsize{$\pm$0.0}}\tdag  & \small \textbf{83.9{\scriptsize{$\pm$1.3}}}\tdag  & \small 82.06{\scriptsize{$\pm$2.4}} & \small \textbf{84.7{\scriptsize{$\pm$1.6}}}\\

    \midrule
     \small HIV \scriptsize(AUC)   & \small 54.4{\scriptsize{$\pm$0.9}}\tdag & \small  63.6{\scriptsize{$\pm$0.0}}\tdag  & \small  -  & \small -  & \small 71.09{\scriptsize{$\pm$1.0}}  & \small 62.7{\scriptsize{$\pm$0.0}}\tdag  & \small \textbf{72.8{\scriptsize{$\pm$1.3}}}\tdag  & \small 71.12{\scriptsize{$\pm$2.3}} & \small \textbf{73.1{\scriptsize{$\pm$1.4}}}\\

    \bottomrule
  \end{tabular}
  \end{adjustbox}
  %\vspace{-3mm}
\end{table*}

\begin{table*}[th!]
  \caption{Number of leaves and (the depth of trees)}\vspace{-2mm}
  \label{tab:n_leafs}
 % \label{tab:n_leafs_complete}
\begin{adjustbox}{width=1\columnwidth,center}
  \begin{tabular}{lccccccccccccc}
    \toprule
    \small Dataset     & \small {CART} & \small {HHCART} & \small {GUIDE} & \small {CO2 } & \small { FTEM} & \small {TAO }& \small {LCN } & \small {CPT(K=1)} & \small {CPT}\\
    \midrule
    \midrule

     MNIST          & \small 805 (D19)\tstar & \small  -  & \small  \textbf{39} (D15)\tstar  & \small - (D14)\tstar  & \small 357 (D12)  & \small 178 (\textbf{D8})\tstar  & \small 65536 (D16)  & \small 501 (D11)  & \small 98 \textbf{(D8)}\\
     
    SensIT         & \small 152 (D12)\tstar & \small  -  & \small  \textbf{24} (D11)\tstar  & \small - (\textbf{D6})\tstar  & \small - (D7)\tddag   & \small 69 (D7)\tstar  & \small 256 (D8)  & \small 81 (D8) & \small 36 (\textbf{D6})\\
     
     Connect4          & \small 5744 (D33)\tstar & \small  -  & \small  27 (D18)\tstar  & \small - (D16)\tstar  & \small 257 (D10)  & \small 210 (D8)\tstar  & \small 65536 (D16)  & \small 342 (D10) & \small \textbf{4 (D2)}\\     

     Letter          & \small 1580 (D27)\tstar & \small  -  & \small  673 (D28)\tstar  & \small - (D12)\tstar & \small 543 (D16) & \small 1078 (\textbf{D11})\tstar  & \small 65536 (D16)  & \small 705 (D17)  & \small \textbf{463} (\textbf{D11})\\
     
     PDBbind          & \small - & \small  -  & \small  -  & \small -  &  \scriptsize Not Applicable  & \small -  & \small 2048 (D11)\tdag  & \small 16 (\textbf{D4}) & \small \textbf{15} (\textbf{D4})\\
     
     Bace          & \small - & \small  -  & \small  -  & \small -  & \small 23 (D8)  & \small -  & \small 4096 (D12)\tdag  & \small \textbf{19} (D9)  & \small 21 (\textbf{D7}) \\
     
     HIV          & \small - & \small  -  & \small  -  & \small -  & \small 21 (D7)  & \small -  & \small 128 (D7)\tdag  & \small 25 (D7)  & \small \textbf{24} (\textbf{D5})\\
     
    %  MNIST          & \small - & \small  -  & \small  -  & \small -  & \small -  & \small -  & \small -  & \small -  & \small -\\

    \bottomrule
  \end{tabular}
  \end{adjustbox}
  \vspace{-2mm}
\end{table*}

\subsubsection{Compared Baselines} We evaluate the performance of CPT against several state-of-the-art decision tree methods, including \textsc{FTEM} (\citet{hehn2019end}, ``End-to-end learning of decision trees and forests''), \textsc{Tao} (\citet{carreira2018alternating}, ``Oblique decision trees trained via alternating optimization''), and \textsc{LCN} (\citet{lee2019locally}, ``Oblique decision trees from derivatives of ReLU networks''). We also consider several additional baselines, including \textsc{Cart} \citep{cart84}, \textsc{Hhcart} \citep{wickramarachchi2016hhcart},  \textsc{GUIDE} \citep{loh2014fifty}, and \textsc{CO2} \citep{norouzi2015efficient}. 
Moreover, to empirically show the importance of flexible boundaries, we also added a baseline \textsc{CPT}, where $K=1$, with hyperplane~cuts.

Our algorithm is implemented in PyTorch and can be trained by gradient-based methods. We use Adam \citep{kingma2014adam} optimization for inferring the tree split parameters. A 10-fold cross-validation on the combined train and validation set is used to learn the hyperparameters, namely the maximum number of polytope sides, number of training epochs, learning rate, and batch-size. However, we decide the depth of the tree based on the performance of CPT on the validation set during training, which can be perceived as early stopping for trees. Following the literature,
we use the Area Under the receiver operating characteristic Curve (AUC) on the test set as the evaluation metric for binary classification, accuracy (ACC) for multi-class classification, and root-mean-squared error (RMSE) for regression. 

Finally, we report the average and standard error of each method's performance by repeating our experiments for 10 random seed initializations. More details about our implementation and the exact values of hyperparameters for each dataset are presented in the Appendix. Our code to reproduce the results is provided at \url{https://github.com/rezaarmand/Convex_Polytope_Trees}.

\subsubsection{Experimental Results} Tables \ref{tab:results_all} and \ref{tab:n_leafs} present the results for a variety of decision tree based algorithms. Some results are quoted from previous works \citep{lee2019locally, hehn2019end, zharmagambetov2019experimental}. %We provide the completed version 
The depth and leaf numbers are averaged over 10 repetitions of training, and then rounded. 
For some large datasets, namely MNIST, SensIT, Connect4, and Letter, we fix the depth parameter as opposed to adaptively tuning it based on the validation set on each run. 
The reason for some missing values in the Table 2 is some methods like TAO did not provide their code, so we could not provide their performance on datasets they had not experimented. Regarding the hyper-parameter tuning for CPT, the total number of different hyper-parameter tuning setups for each dataset was less than 25 (5*5) cases. For the baseline methods, we quote the best results tuned and reported by the authors (e.g., LCN, according to their paper, does at least (3*11) hyperparameter tuning). Also, for TAO, we report the best results by the authors.
For each dataset, the best result and  those with no statistically significant difference (by using two sample $t$-test and $p$-value of 0.05)~are~highlighted.

From the results, it is evident that the added flexibility in splitting rules combined with an efficient training algorithm allows CPT to outperform the baseline algorithms. Our method achieves the state-of-the-art performance, while, notably, using significantly shallower trees. For instance, CPT obtains the best performance in Connect4 with only depth 2 and 4 leaves, while other methods need a depth~of~at~least~8. 

It also improves the regression performance on the PDBbind dataset by a large margin. Although LCN achieves competitive results in terms of accuracy on some datasets, it needs to grow the tree's size exponentially, significantly sacrificing the model interpretability. That is mainly because LCN, in contrast to our model, always learns a complete tree and generally needs to have a considerably large depth to achieve competitive results. For instance, consider its enormous size when trained on MNIST and Connect4 in Table \ref{tab:n_leafs}.

\section{CONCLUSION}
We propose convex polytope trees (CPT) as a generalization to the class of oblique trees that improves their accuracy and shrinks their size, which consequently provides more robust predictions. CPT owes its performance to two main components: flexible decision boundaries and an efficient training algorithm. The proposed training algorithm well addresses the challenge to learn not only the parameters of the tree but also its structure. Moreover, we demonstrate the efficacy and efficiency of CPT on a variety of tasks and datasets. 
The empirical successes of CPT show promise for further research on other interpretable generalizations of decision boundaries. This can lead to a significant performance gain for the family of decision tree models. Another promising direction for future work is investigating the combination of CPT with various ensemble methods, such as boosting.

% \section*{ACKNOWLEDGEMENTS}
% The authors acknowledge the Texas Advanced Computing Center (TACC) at The University of Texas at Austin for providing HPC resources that have contributed to the research results reported within this paper. URL: \url{http://www.tacc.utexas.edu}
% %
%
%
%
%

%
%

%
%

%
%
%
\bibliographystyle{abbrvnat}
\bibliography{tree_ref.bib}

\appendix
\onecolumn
\singlespacing
\section{PROOFS}
In this section, we show why the splitting function at each internal node results in a convex set confined within a convex polytope. We start by proving a lemma which is needed to prove the main theorem.

\begin{lemma}\label{lemma}
For any $\{r_i,\betav_i\}_{i=1}^K$, such that $r_i \in \mathbb{R}_+ $ and $\beta_i \in \mathbb{R}^d $, the function:
\beq
g(\xv):= \sum_{i=1}^{K}r_i \ln{(1+e^{\betav_i'\xv})}
\eeq
is convex over its domain $\mathbb{R}^d$.
\end{lemma}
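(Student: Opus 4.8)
The plan is to show $g$ is a finite nonnegative linear combination of convex functions, hence convex. Since $r_i \in \mathbb{R}_+$, it suffices to show that each summand $\xv \mapsto \ln(1+e^{\betav_i'\xv})$ is convex on $\mathbb{R}^d$; the sum of convex functions with nonnegative coefficients is then convex.

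First I would recall that the scalar function $h(t) = \ln(1+e^t)$ (the ``softplus'') is convex on $\mathbb{R}$: its second derivative is $h''(t) = e^t/(1+e^t)^2 = \sigma(t)(1-\sigma(t)) \geq 0$, where $\sigma$ is the logistic sigmoid. Then $\xv \mapsto \ln(1+e^{\betav_i'\xv})$ is the composition of the convex function $h$ with the affine map $\xv \mapsto \betav_i'\xv$, and composition of a convex function with an affine map is convex (the Hessian is $\betav_i \betav_i' \, h''(\betav_i'\xv) \succeq 0$, or one can argue directly from the definition of convexity without differentiability). Summing over $i=1,\dots,K$ with weights $r_i \geq 0$ preserves convexity, so $g$ is convex on all of $\mathbb{R}^d$.

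I do not expect any real obstacle here — this lemma is essentially a textbook fact assembled from (i) convexity of softplus, (ii) affine precomposition, and (iii) nonnegative combinations. The only point requiring a little care is to state these closure properties explicitly rather than just asserting them, and to note that the domain is genuinely all of $\mathbb{R}^d$ (no boundary issues, since $1+e^{\betav_i'\xv} > 0$ everywhere). The reason this lemma is worth isolating is that it feeds directly into Theorem~\ref{thm:polytope}: once $g$ is convex, the set $\{\xv : g(\xv) \le c\}$ is a convex sublevel set, and after unwinding $f(\xv) \le q_{\textit{thr}}$ into an inequality of the form $g(\xv) \le -\ln(1-q_{\textit{thr}})$ one gets that $A_{\textit{left}}$ is convex; the polytope structure then comes from analyzing $g$ as a max-like aggregate of the affine facet functions.
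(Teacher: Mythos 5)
Your proposal is correct and follows essentially the same route as the paper: both arguments reduce to the convexity of each summand via the Hessian $\betav_i\betav_i'\, h''(\betav_i'\xv) \succeq 0$ (with $h$ the softplus) and then invoke closure of convexity under nonnegative linear combinations. Your framing via affine precomposition of the one-dimensional softplus is a slightly cleaner packaging of the identical computation.
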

\begin{proof}
Since the sum of convex functions is also convex, it suffice to show each term of $g$ is a convex function. We demonstrate this by using the following theorem: ``A function is convex iff its second derivative is a positive semi-definite matrix over the domain.'' One can omit $r_i$'s in the following calculations because a positive scalar does not change the convexity.

The fist derivative of each term is:
\beq
\frac{\partial \ln{(1+e^{\betav_i'\xv})} }{\partial \xv} = \frac{e^{\betav_i'\xv}}{e^{\betav_i'\xv} + 1}\betav_i'
\eeq
and by taking the derivative of the above vector, we will have:
\beq
\frac{\partial^2 \ln{(1+e^{\betav_i'\xv})} }{\partial \xv^2} = \frac{e^{\betav_i'\xv}}{(e^{\betav_i'\xv} + 1)^2}\betav_i\betav_i'
\eeq

where $\betav_i\betav_i'$ is a matrix in $\mathbb{R}^d \times \mathbb{R}^d$. Since the scalar $ \frac{e^{\betav_i'\xv}}{(e^{\betav_i'\xv} + 1)^2}$ is positive for any $\xv$, we just need to show the matrix $\betav_i\betav_i'$ is positive semi-definite. To that end, we prove for any $\vv \in \mathbb{R}^d$:
$$\vv' (\betav_i\betav_i')\vv\geq 0.$$
And, that can be shown by:
$$\vv' (\betav_i\betav_i')\vv = (\vv' \betav_i)\cdot(\betav_i'\vv) = (\betav_i'\vv)' \cdot (\betav_i'\vv)= \norm{\betav_i'\vv}^2\geq 0. $$

Therefore the proof of the lemma is complete.

\end{proof}

\begin{theorem}
For any $\{r_i,\betav_i\}_{i=1}^K$, such that $r_i \in \mathbb{R}_+ $ and $\beta_i \in \mathbb{R}^d $, let: 
$$
A_{\textit{left}}=\{ \xv \given \xv \in \mathbb{R}^d,  ~~ \textstyle f(\xv)\leq q_{\textit{thr}} \} ,
$$
where:
\beq
% \textstyle P(vote =\text{``Yes''}\given \{r_i,\betav_i\}_i,\xv) = 1-e^{-\sum_{i=1}^{K}r_i \ln{(1+e^{\betav_i'\xv})}},
\textstyle f(\xv) = 1-e^{-\sum_{i=1}^{K}r_i \ln{(1+e^{\betav_i'\xv})}},
\eeq
then $A_{\textit{left}}$ is a convex set, confined with a convex polytope.
\end{theorem}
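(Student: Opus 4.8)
The plan is to reduce the statement about $A_{\textit{left}}$ to a statement about sublevel sets of the convex function $g(\xv) = \sum_{i=1}^K r_i \ln(1+e^{\betav_i'\xv})$ established in Lemma~\ref{lemma}. First I would observe that $f(\xv) = 1 - e^{-g(\xv)}$ is a composition of $g$ with the scalar map $t \mapsto 1 - e^{-t}$, which is monotonically increasing on $\mathbb{R}$. Hence the condition $f(\xv) \le q_{\textit{thr}}$ is equivalent to $g(\xv) \le -\ln(1 - q_{\textit{thr}})$ whenever $q_{\textit{thr}} < 1$ (and the sublevel set is all of $\mathbb{R}^d$ if $q_{\textit{thr}} \ge 1$, which is trivially convex and can be dismissed as a degenerate case). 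So $A_{\textit{left}} = \{\xv : g(\xv) \le c\}$ for the constant $c = -\ln(1 - q_{\textit{thr}})$.

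Next I would invoke the standard fact that every sublevel set of a convex function is convex: if $g(\xv_1) \le c$ and $g(\xv_2) \le c$, then for $\lambda \in [0,1]$, convexity of $g$ gives $g(\lambda \xv_1 + (1-\lambda)\xv_2) \le \lambda g(\xv_1) + (1-\lambda) g(\xv_2) \le c$. This establishes the convexity claim directly from Lemma~\ref{lemma}.

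The remaining and more delicate point is the ``confined by a convex polytope'' claim, i.e. that $A_{\textit{left}}$ is contained in (or its shape is governed by) an intersection of $K$ halfspaces, one per expert. The key estimate here is a lower bound on $g$ in terms of the individual linear terms: since $\ln(1+e^{t}) \ge \max(t, 0) \ge t$, we have $r_i \ln(1+e^{\betav_i'\xv}) \ge r_i \betav_i'\xv$ for each $i$ (using $r_i \ge 0$), and more usefully $r_i \ln(1+e^{\betav_i'\xv}) \ge r_i (\betav_i'\xv)^+ \ge 0$. Therefore $g(\xv) \ge r_j \ln(1+e^{\betav_j'\xv})$ for each fixed $j$, so membership $g(\xv) \le c$ forces $r_j \ln(1+e^{\betav_j'\xv}) \le c$ for every $j$, which (when $r_j > 0$) is equivalent to $\betav_j'\xv \le \ln(e^{c/r_j} - 1)$ — a halfspace constraint with normal $\betav_j$. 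Intersecting over $j$ exhibits $A_{\textit{left}}$ as a subset of a convex polytope whose facets are perpendicular to the $\betav_j$'s, matching the geometric description in the paper (experts with larger $r_j$ give tighter, more influential constraints). I would also note the boundary $\{g = c\}$ is a smooth convex hypersurface and record that each expert contributes exactly one facet direction, so the bounding polytope has at most $K$ facets.

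The main obstacle is making the ``confined by a polytope'' phrase precise and deciding how much to claim: a clean, fully rigorous version is the halfspace-containment argument above, whereas a stronger claim that the boundary asymptotically approaches the polytope's faces (so the polytope is the ``tightest'' one) requires a short asymptotic analysis of $\ln(1+e^{t}) - t^+ \to 0$ as $|t| \to \infty$ along each coordinate direction. I would present the containment argument as the rigorous core and add the asymptotic remark informally to justify the ``$K$-sided polytope'' language used in the discussion following the theorem. Edge cases to handle briefly: $r_j = 0$ (that expert imposes no constraint and contributes no facet), and $q_{\textit{thr}} = 1$ or $q_{\textit{thr}} \ge 1$ (trivial).
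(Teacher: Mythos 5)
Your proposal is correct and follows essentially the same route as the paper's proof: reduce $f(\xv)\le q_{\textit{thr}}$ to a sublevel set of the convex function $g$ from Lemma~\ref{lemma} via the monotone map $t\mapsto -\ln(1-t)$, apply Jensen's inequality for convexity, and obtain the bounding polytope by noting each nonnegative summand of $g$ is individually bounded by the threshold, yielding the halfspaces $\betav_i'\xv \le \ln\bigl(e^{q_{\textit{thr}}^*/r_i}-1\bigr)$. Your added handling of the degenerate cases ($r_j=0$, $q_{\textit{thr}}\ge 1$) and the asymptotic tightness remark are sensible refinements the paper omits, but they do not change the argument.
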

\begin{proof}

%1-\prod\nolimits_{i=1}^K(1-p_{i})=
We start by showing $A_{\textit{left}}$ is a convex set. Note that, due to the duality
\beq \label{eq:dual}
\xv \in A_{\textit{left}} \iff f(\xv) \leq q_{\textit{thr}}
\eeq
By the definition of a convex set, we just need to prove the following:
%for any $\xv_1, \xv_2$ such that $f(\xv_1), f(\xv_2)>q_{\textit{thr}}$, 
\beq \label{eq:state}
\forall t \in [0, 1], \forall \xv_1, \xv_2 \in \mathbb{R}^d  ~~~\text{if}~~~ f(\xv_1), f(\xv_2) \leq q_{\textit{thr}} \implies  f(t \xv_1 + (1-t) \xv_2) \leq q_{\textit{thr}}. 
\eeq
% The reason is a set considered convex if and only if it contains the line segment between any of its two points.

Let $g(.)$ and $q_{\textit{thr}}^*$ be:
\beq
g(\xv) := -\ln{(1-f(\xv))} = \sum_{i=1}^{K}r_i \ln{(1+e^{\betav_i'\xv})}, \quad q_{\textit{thr}}^*:= -\ln{(1-q_{\textit{thr}})}.
\eeq

Since $-\ln{(1-a)}$ is monticaly increasing with respect to $a$, replacing $f$ by $g(.)$ and $q_{\textit{thr}}$ by $q_{\textit{thr}}^*$ in (\ref{eq:state}), results in a mathematically equivalent expression. Now, we can prove the new statement using Jensen's inequality. To be more specific, based on Lemma \ref{lemma} ($g$ is convex) and Jensen's inequality, we have:
\beq 
\forall t \in [0, 1], \forall \xv_1, \xv_2 \in \mathbb{R}^d  ~~~~~~ g(t \xv_1 + (1-t) \xv_2) \leq t g(\xv_1)+ (1-t) g(\xv_2) . 
\eeq
So if $g(\xv_1), g(\xv_2) \leq q_{\textit{thr}}^*$:
$$g(t \xv_1 + (1-t) \xv_2) \leq t g(\xv_1)+ (1-t) g(\xv_2) \leq t q_{\textit{thr}}^*+ (1-t) q_{\textit{thr}}^* = q_{\textit{thr}}^*  $$
proving $A_{\textit{left}}$ is convex.

% It is also worth mentioning, we could not apply Jensen's inequality directly to $f$ because it is not always convex or concave for different parameters.

We are just remained with showing $A_{\textit{left}}$ is confined within a convex polytope. This can be shown by:
\begin{align*}
    f(\xv) \leq q_{\textit{thr}} \iff g(\xv) \leq q_{\textit{thr}}^* & \implies \forall i \in [1:K], ~~~ r_i \ln{(1+e^{\betav_i'\xv})} \leq {q_{\textit{thr}}^*} \\
    & \iff \forall i \in [1:K], ~~~ \betav_i'\xv \leq \ln{(e^{\frac{q_{\textit{thr}}^*}{r_i}} -1)}  
\end{align*}
which completes the proof.
\end{proof}

\section{ADDITIONAL DETAILS ON % RESULTS AND
EXPERIMENTAL SETTINGS}

As mention in the paper, we train CPT in a probabilistic manner and switch to a deterministic tree at test time. To make the transition smoother, we conduct annealing during training. To be more specific, we transform the probability function $f(\xv)$ at each node to $f_{\lambda_t}(\xv)$, where:

\beq
f(\xv) = 1-e^{-\sum_{i=1}^{K}r_i \ln{(1+e^{\betav_i'\xv})}} \quad \text{and} \quad~~~~ f_{\lambda_t}(\xv) :=\frac{1}{1+(\frac{1-f(\xv)}{1- p_0})^{\lambda_t} }
\eeq

Larger $\lambda_t$ results in a sharper change of probability from 0 to 1, and $p_0$ controls where that change happens. During training, we gradually increase $\lambda_t$ to make the gap between probabilistic and deterministic tree progressively smaller. We also learn $p_0$ like other parameters of the model using SGD. Notice, the change of $f$ to $f_{\lambda_t}$ keeps the mathematical and geometrical interpretation of CPT intact. That is because any thresholding of $f_{\lambda_t}$ has an equivalent counterpart for $f$ since $f$ and $f_{\lambda_t}$ are strictly monotonic function of each other.

\section{ALGORITHM}

%%% algorithm
%\begin{wrapfigure}{R}{0.6\textwidth}
%\begin{minipage}{0.6\textwidth}
 \begin{algorithm}[H]
 %\algsetup{linenosize=\scalebox{.02}}
  %\tiny
  %\captionsetup{font=scriptsize}
\caption{Stochastic gradient descent training of the tree splitting parameters for classification task.\newline\textbf{Input:} Data $\{(\xv^{(n)}, y^{(n)})\}_{n=1:N}$, initial tree $\mathcal{T}^{(0)}$ from GreedyTopology-Leaner algorithm, maximum number of polytope sides $K$, hyper-parameters of the gamma process prior $a_0, b_0, c_0, \gamma_0$}

\begin{algorithmic}
\label{alg:SGD}
\FOR{number of training iterations}
    \STATE{$\bullet$ Sample a batch of $m$ data samples $\{ (\xv_1, y_1), \dots, (\xv_m, y_m) \}$ }
    \STATE{$\bullet$ Send the data samples to the current probabilistic tree to get probability of each samples being in any leaf $P_{\thetav^{(t)}}(\ell_n = \ell \given \xv_n)$ for all  $n \in [1:m], \ell \in  S_{\textit{leaf}}$ using Eqn. (6).}
    \STATE{$\bullet$ For each leaf, calculate the vector $\hat{\piv}^{\ell}= [\hat{\pi}^{\ell}_{1},\dots, \hat{\pi}^{\ell}_{C}]$ which represent for the current batch what is the proportion of each data label in the leaf $\ell$. $$
\hat{\pi}^{\ell}_{j} = \frac{\sum_n \mathbf{1}_{[y_n=j]} P_{\thetav^{(t)}}(\ell \given \xv_n)}{\sum_n P_{\thetav^{(t)}}(\ell \given \xv_n)} $$ }

    \STATE{$\bullet$ Estimate the entropy of the data labels $\Ymat$, conditioned on the leaf id $\Lmat$ :$$
    \mathcal{\hat{H}}(\Ymat \given \Lmat) := -\sum\nolimits_{ \ell \in  S_{\textit{leaf}}}  \left[\sum\nolimits_{j=1}^C \hat{\pi}^{\ell}_{j} \log (\hat{\pi}^{\ell}_{j})\right] \times {\sum_{n=1}^{m} P_{\thetav^{(t)}}(\ell \given \xv_n)}
    $$
    }
    \STATE{$\bullet$ Calculate the regularization term related to the gamma process prior:

    \begin{dmath*}
    \mathcal{L}_{\textit{reg}}=\textstyle \sum_{\nu\in S_{\textit{branch}}} \left[ \sum_{k=1}^K\left(-(\frac{\gamma_0}{K}-1)\ln r_k^{(\nu)} + c_0  r_k^{(\nu)}\right)+\textstyle(a_0+1/2)\sum_{j=0}^d\sum_{k=0}^K [ \ln(1+\beta^{(\nu) 2}_{jk}/(2b_{0}) ) ]\right]
    \end{dmath*}

}

    \STATE{$\bullet$ Update the tree parameters by descending their stochastic gradient:
        \[
            \nabla_{\thetav^{(t)}} \left( \mathcal{\hat{H}}(\Ymat \given \Lmat) + \mathcal{L}_{\textit{reg}} \right)
        \]}
  \ENDFOR
  \STATE{$\bullet$ Send all the data samples to the to the tree, and for each leaf set its parameter as $\hat{\piv}^{\ell}$ }
  \STATE{$\bullet$ Use the threshold of 0.5 for the probabilistic decision function at each branch node to achieve deterministic~tree}
  
\end{algorithmic}
\end{algorithm}

\end{document}